\newtheorem{theorem}{Theorem}
\newtheorem{definition}{Definition}
\newtheorem{proposition}{Proposition}
\newtheorem{note}{Note}
\begin{document}
\title{\LARGE \bf
Using Information Invariants to Compare Swarm Algorithms and General Multi-Robot Algorithms
}
\author{Gabriel Arpino, Kyle Morris, Sasanka Nagavalli, \textit{Student Member, IEEE}, Katia Sycara, \textit{Fellow, IEEE}
\thanks{*The authors are with the Robotics Institute, School of Computer Science, Carnegie Mellon University, Pittsburgh, Pennsylvania, USA. Email: {\tt \small \{garpino, kmorris1, snagaval\}@andrew.cmu.edu, katia@cs.cmu.edu}. This research has been sponsored in part by AFOSR Award FA9550-15-1-0442 and an NSERC PGS D scholarship.}
}
\maketitle

\begin{abstract}
Robotic swarms are decentralized multi-robot systems whose members use local information from proximal neighbors to execute simple reactive control laws that result in emergent collective behaviors. In contrast, members of a general multi-robot system may have access to global information, all-to-all communication or sophisticated deliberative collaboration. Some algorithms in the literature are applicable to robotic swarms. Others require the extra complexity of general multi-robot systems. Given an application domain, a system designer or supervisory operator must choose an appropriate system or algorithm respectively that will enable them to achieve their goals while satisfying mission constraints (e.g. bandwidth, energy, time limits). In this paper, we compare representative swarm and general multi-robot algorithms in two application domains --- navigation and dynamic area coverage --- with respect to several metrics (e.g. completion time, distance travelled). Our objective is to characterize each class of algorithms to inform offline system design decisions by engineers or online algorithm selection decisions by supervisory operators. Our contributions are (a) an empirical performance comparison of representative swarm and general multi-robot algorithms in two application domains, (b) a comparative analysis of the algorithms based on the theory of information invariants, which provides a theoretical characterization supported by our empirical results.
\end{abstract}

\section{Introduction}

For more than the past decade, there has been significant growing interest in multi-robot systems (MRSs), which use a team of cooperating robots to accomplish tasks collaboratively. This enables multi-robot systems to complete tasks that cannot be completed by a single robot or would be less efficient or effective to complete with a single robot~\cite{yan2013survey}. Multi-robot systems may be characterized along multiple dimensions~\cite{iocchi2000reactivity} including but not limited to the mechanism for coordination among robots (e.g. communication vs. sensing only), centrality of coordination (e.g. centralized vs. decentralized), the extent of information available to team members (e.g. local information vs. global information), sophistication of the control logic executed by robots (e.g. reactive vs. deliberative collaboration), the structure of information propagation within the robotic network (e.g. neighbor-only connectivity vs. all-to-all communication), homogeneity or heterogeneity of the team members.

Algorithms for multi-robot coordination considered in the extant literature~\cite{bullo2009distributed, kolling2016human} make different combinations of assumptions along these dimensions about the underlying multi-robot system to which they are applied. Conversely, the extensive behavior of the multi-robot system is dictated by the properties of the algorithm it uses for internal coordination. Now we consider a particular type of multi-robot system known as a \textit{robotic swarm}. Robotic swarms are characterized by homogeneous robots executing a simple reactive control law using only local information from proximal swarm members and the environment within a limited spatial neighborhood. The collective behavior (e.g. flocking, rendezvous, dispersion) of the swarm emerges as a result of all swarm members executing the same local control law and no individual swarm member ever necessarily becomes aware of the whole swarm. Swarm behaviors are often not goal-directed but may be combined through behavior composition to accomplish tasks for which no individual behavior was designed~\cite{nagavalli2017automated, nagavalli2017on}. Conventional wisdom in the literature has been that the simplicity of the local control laws executed by swarm members and the locality of the information required for their execution makes robotic swarms more scalable and robust than other types of multi-robot systems because members may be inserted and deleted with minimal system reconfiguration~\cite{kolling2016human}.

In this paper, we compare the performance of algorithms designed specifically for robotic swarms (i.e. systems whose members have access to local information, neighbor-only sensing and simple reactive control laws) to those designed for general multi-robot systems (i.e. systems whose members may have access to global information, all-to-all communication and sophisticated deliberative collaboration). Since all decentralized algorithms can equivalently be implemented in a centralized system, to make the comparison useful, in both cases, we assume coordination is decentralized. Algorithms have been developed for several multi-robot applications~\cite{yan2013survey} including navigation~\cite{desaraju2011decentralized, alonso2016distributed}, static area coverage~\cite{breitenmoser2010voronoi}, dynamic area coverage~\cite{choset2001coverage, galceran2013survey, nestmeyer2017decentralized, atincc2014swarm}, patrolling~\cite{portugal2011survey} and many more. However, in this paper, we limit our comparison to representative algorithms in two application domains: (a) navigation and (b) dynamic area coverage.

Our objective in performing this comparison is to identify and highlight the strengths and weaknesses of swarm and general multi-robot algorithms in a manner that can inform offline design decisions made by engineers or online operational decisions made by supervisory operators of multi-robot systems. For example, for a given application domain, an engineer may decide a simple robotic swarm will suffice without the additional cost and complexity of a general multi-robot system. Conversely, given a multi-robot system, there may be situations where it is beneficial (e.g. reduced coordination complexity) for a supervisory operator to apply an algorithm designed for robotic swarms rather than incur the communication overhead of a general multi-robot coordination algorithm. Other examples exist and it is our hope that the results in this paper can inform such decisions.

We make the following contributions: (a) an empirical evaluation of the performance of representative swarm and general multi-robot algorithms within two different application domains (navigation and dynamic coverage), (b) a comparative analysis of these algorithms based on the theory of information invariants, which provides a  theoretical characterization supported by our empirical results.


\section{Swarm and Multi-Robot Algorithm Selection}
\label{sec:algorithm_selection}

\subsection{Algorithm Description and Implementation}

As shown in Table~\ref{tbl:algorithms}, five representative algorithms were implemented to study the relative performance of swarm and decentralized multi-robot algorithms. The algorithms were then analyzed in terms of their information invariants \cite{donald1995information} and their empirical performance was compared with respect to multiple metrics.

\begin{table*}[!ht]
\caption{Algorithms for Comparison}
\label{tbl:algorithms}
\begin{center}
\begin{tabular}{| c | c | c |}
\hline
 & \bf{Swarm} & \bf{Multi-Robot}\\
\hline
\bf{Navigation} & Potential Fields~\cite{li2013bounded} & DMA-RRT~\cite{desaraju2011decentralized}\\
 & Proportional Barrier Certificates~\cite{borrmann2015control, wang2016safety} & \\
 \hline
\bf{Dynamic Area Coverage} & Individual Dynamic Coverage~\cite{hussein2007effective} & Group Dynamic Coverage~\cite{atincc2014swarm}\\
\hline
\end{tabular}
\end{center}
\end{table*}



\subsubsection{Potential Fields (PF)}
This algorithm, as described in \cite{li2013bounded}, is a gradient-based navigation approach with guaranteed goal convergence. In implementation, the algorithm exhibited numerical issues involving the magnitude of the gradient, discussed in the results section. Characteristics of this algorithm relevant to our comparison: a) no explicit communication between robots (sensing-only swarm algorithm), b) robots initially configured as a connected graph will converge to the goal as a connected graph, c) robots follow the gradient of a potential function with a global minima at the goal while avoiding obstacles and other robots.

\subsubsection{Proportional Barrier Certificates (PBC)}

PBC uses a proportional controller in conjunction with a barrier certificates reactive controller as described in \cite{borrmann2015control, wang2016safety}. The algorithm uses a quadratic programming approach enabling obstacle avoidance while following the original goal trajectory (proportional controller output). The barrier certificates algorithm guarantees forward invariance of the safe set, implying that inter-robot collision avoidance is guaranteed. Characteristics of this algorithm relevant to our comparison: a) no explicit communication between robots (sensing-only swarm algorithm), b) robots take action to avoid collisions only when sufficiently close (sense the proximity), c) robots follow a proportional controller, creating a straight line path to the goal when there are no obstacles.






\subsubsection{DMA-RRT}
The DMA-RRT algorithm outlined in~\cite{desaraju2011decentralized} embeds a closed-loop RRT \cite{luders2010bounds} in each robot and introduces a merit-based token passing coordination mechanism. Agents with the largest incentive to replan will acquire the token and broadcast their updated plan to all other agents. Other agents then forward simulate this time parameterized trajectory and update their own constraints. Our implementation uses a holonomic model with a proportional controller (PC) for waypoint navigation. It is important to note that we are using the original DMA-RRT algorithm and not the cooperative extension involving emergency stops \cite{desaraju2011decentralized}. Characteristics of this algorithm relevant to our comparison: a) explicit communication between the robots which is bounded above by a set of $n$ waypoints, where $n$ is the size of the created path, b) communication occurs through a token passing strategy, where only one robot gets to alter its plan at any given time, c) obstacle and inter-robot avoidance is done implicitly through the robots' creation of non-intersecting paths.

\subsubsection{Individual Dynamic Coverage (IDC)}
The dynamic area coverage problem tackled in this work is not to be confused with the static area coverage~(e.g. \cite{breitenmoser2010voronoi}) problem, where robots move to maximize coverage of map in the final positions to which they converge. Instead, it involves generating robot motion to maximize the coverage (sensed areas) of an entire map by multiple robots over the entire mission time horizon. Dynamic area coverage problems have been tackled by various stigmergic and frontier-based dynamic coverage algorithms such as~\cite{wang2011frontier,tang2017stigmergy}. The IDC algorithm is a gradient-based dynamic coverage method outlined in \cite{hussein2007effective}. The coverage error is guaranteed to be non-increasing but susceptible to local minima. Perturbations from local minima, in the computational space, involve the selection of a new point (according to a predefined rule) upon which to resume gradient descent. In the real world, this corresponds to an individual robot discontinuing the application of the gradient-based control law, physically moving to a new location and then resuming the application of the gradient-based control law. All robots are perturbed individually and the rule for new position selection was not specified. The algorithm was implemented in a decentralized way by replacing the centralized goal point ordering with an emergency timer as used in the provided implementation of \cite{hussein2007effective}. Characteristics of this algorithm relevant to our comparison are: a) no explicit inter-robot communication (sensing-only swarm algorithm), b) robots follow a gradient descent procedure towards unexplored areas, c) agents get stuck in local minima, d) agents exit local minima independent of other agents, e) each individual grid cell in the map has a limited amount of information that can be exhausted after repeated visits by robots.


\subsubsection{Group Dynamic Coverage (GDC)}
This decentralized algorithm involves the multi-robot gradient based dynamic coverage strategy outlined in~\cite{atincc2014swarm}. The algorithm is similar to Perturbation Dynamic Coverage, but in the real world agents are now perturbed all at the same time once a global condition is met (all robots begin to slow down sufficiently) and a leader agent selects a new rendezvous point towards which the entire group moves. Once the multi-robot group is within a certain radius of the newly selected point, all robots shift into dynamic coverage mode and the state machine is reset. In the computational space, this corresponds to all agents reaching an overall local minima, and them all being perturbed by relocating themselves to the same region of the workspace, resuming the gradient descent from there. The main difference between the IDC algorithm and the GDC algorithm is that the latter perturbs all robots at the same time and to the same new point, once the change in coverage of the map descends below a threshold. This method involves the communication of a new swarming point from the leader to every other agent in the group, which in the real world could be accomplished by wireless communication (for example). Characteristics of this algorithm relevant to our comparison are: a) explicit inter-robot communication, b) robots follow a gradient descent vector towards unexplored areas, c) agents are susceptible to local minima, d) agents exit local minima \textbf{dependent} on the new perturbation point that the leader agent selects, e) each individual grid cell in the map has a limited amount of information that can be exhausted after repeated visits by robots.

\section{Experimental Evaluation}
\label{sec:experimental_evaluation}

\subsection{Reproduction of Results and Scaling}
Algorithms were implemented in Python and later integrated into the CMUSWARM Framework \cite{RISS}. We first reproduced the results in the corresponding papers. For each algorithm, the number of robots were scaled for preliminary qualitative testing beyond the experiments in the original papers. As seen in Figure~\ref{scaling_plot2}, the time to convergence for Potential Fields, PBC, and GDC generally increase with the number of robots, whereas Individual Dynamic Coverage decreases.

\begin{figure}[!h]
  \includegraphics[scale=0.45]{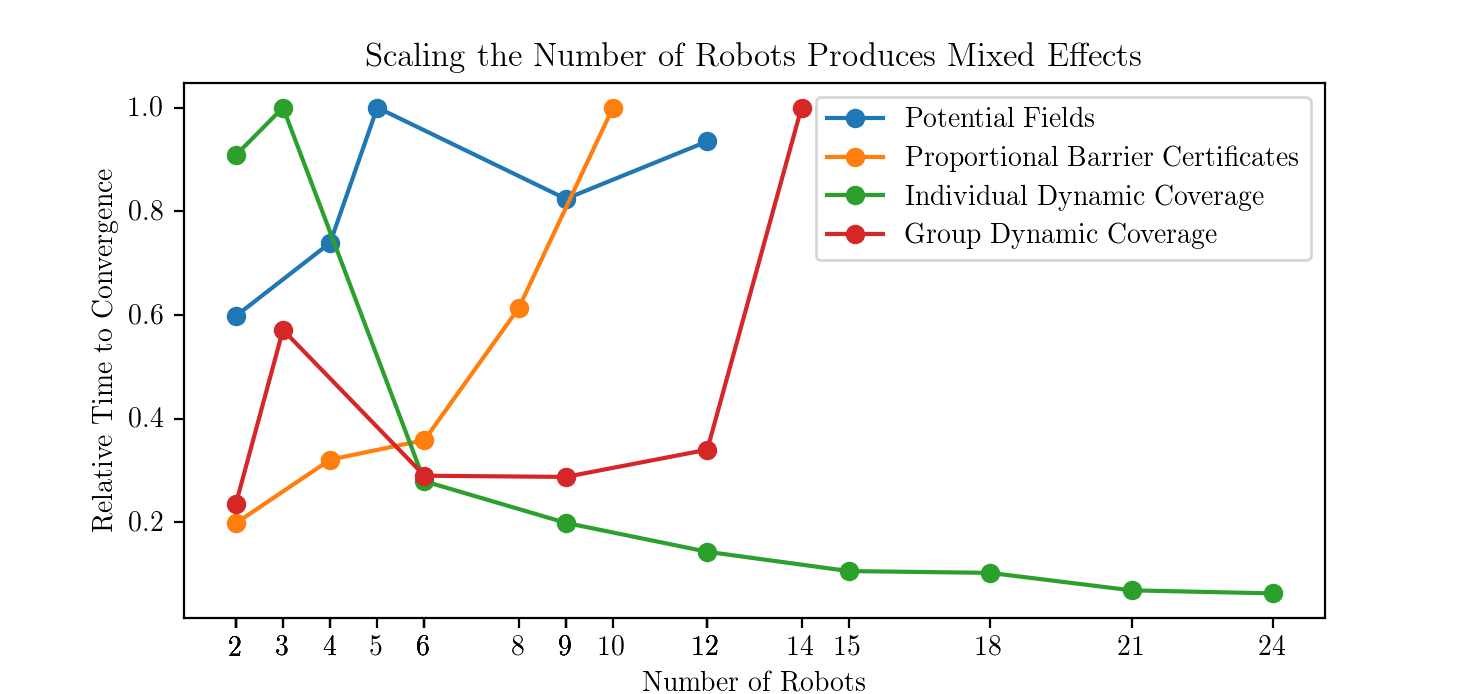}
  \centering
  \caption{Results of Preliminary Scaling}
  \label{scaling_plot2}
\end{figure}

Figure \ref{scaling_plot2} does not contain data on DMA-RRT due to the nature of the initial research. This algorithm was initially evaluated by having multiple robots cycle through a list of goals for 10 minutes. For DMA-RRT, we reproduced scenario A \cite{desaraju2011decentralized} of the paper which involved 10 agents cycling through 10 goals, and scenario B \cite{desaraju2011decentralized}, with 4 agents cycling through 2 goals. We observed in our reproduction that the higher the number of robots in the simulation, the more collisions were experienced. \footnote{Code used for simulations is available at https://github.com/gabrielarpino/Swarm-vs-Multi-Robot-Algorithms}


\subsection{Experimental Setup}
The algorithms mentioned above were benchmarked in the CMUSWARM Framework on ROS and Gazebo. Each algorithm was run for 20 trials on 5 different 20x20 meter maps, using 4, 8, and 16 holonomic 0.2 x 0.2 m iRobot Create vehicles totalling 300 trials.

An experimental trial is marked as successful when the algorithm converges and finishes the task within 10 minutes, otherwise it fails. We set a 10 minute cut-off primarily for three reasons. Firstly, we wish to establish a mission-critical time frame, in which the task must be completed. Secondly, the successful trials took far less than 10 minutes, with more than 10 minutes passing when a robot was stuck on an obstacle or in some other observably unrecoverable state. Lastly, with a 10 minute cap on simulation time, with a real time factor making the actual trial take 14+ minutes, running 1000+ trials to cover all 5 algorithms took well over 150 hours. We wish to optimize our Framework’s benchmarking process in the future to lessen this burden, and also introduce automated deadlock detection to determine if the system is in a stuck state.

If an algorithm fails to converge 20/20 times, we state it is intractable in the scenario, otherwise we will keep running trials until 20 successful trials are recorded.
Parameter tuning was a manual process performed for each algorithm in the initial 4 robot case. Once a feasible set of parameters was found, they were used in the 4, 8, and 16 robot scenario. Manual re-tuning was done for each map.

Performance is measured by the metrics of convergence time (until task completion), distance travelled, area coverage (for navigation), sensor coverage (for dynamic coverage), and number of collisions. These metrics are significant in mission-critical situations because optimal-time response is necessary in emergency settings, real robots use up fuel with movement, and because collisions can cause the robot system to break. Our measures of convergence time and distance travelled correspond to the dynamic coverage Time and dynamic coverage Cost metrics outlined in \cite{yan2015metrics}, and are therefore classified as objective methods of comparison for dynamic coverage algorithms. For navigation, convergence time is the amount of time required for all robots to become within 3 meters of the goal location. For dynamic coverage algorithms, convergence time is measured as the amount of time to reach 100\% coverage of the map.


\begin{figure}[h]
  \includegraphics[scale=0.19]{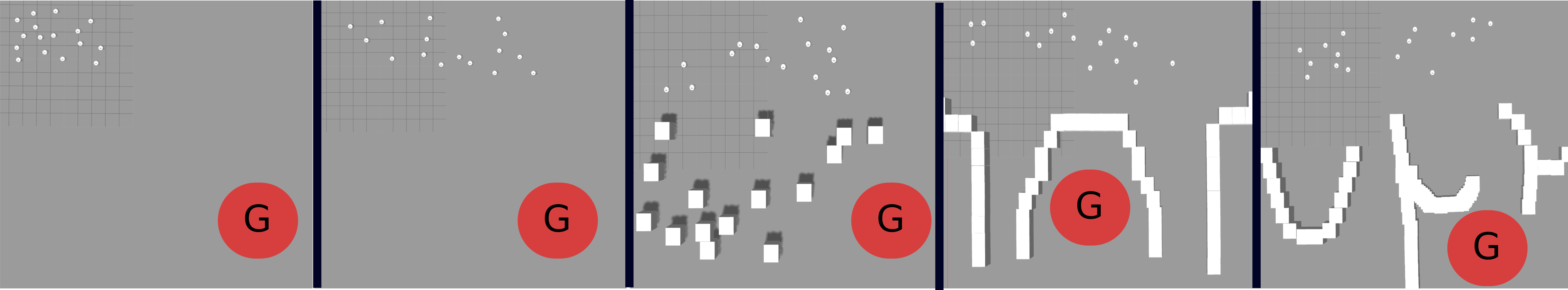}
  \centering
  \caption{The 5 maps in Gazebo used for our experiments. Maps From left to right\: 1 (Empty Map Dense with spawn region ([1-6], [1-8]), 2 (Empty Map Spread with spawn region ([1-6], [1-19])), 3 (Uniform Map), 4 (Corridor Map), 5 (Concave Map). The goal region is the red disk labeled G, with robots (White) spawning in the upper region. Robots spawn in the x in [1,6], y in [1,19] range for all maps except empty dense. }
  \label{experimental_maps}
\end{figure}

\subsection{Experimental Results}

\textbf{Potential Fields Navigation:} Much parameter tweaking was needed in order to have both convergence time and collision avoidance guarantees. PF had 0 inter-robot collisions in obstacle free environments, and the fewest collisions in obstacle filled environments (map 3, Figure \ref{experimental_maps}) among the navigation algorithms, however this algorithm becomes intractable as the number of robots surpasses roughly 8 in any scenario. We believe this is due to the connectivity constraints requiring all robots to be connected within some proximity radius. As more robots are introduced in a spread out scenario (maps 2-5) there is an increasing likelihood of breaking the initial connectivity based on initial spawn location. Furthermore as the number of robots or other static obstacles increases, the system becomes increasingly slow at maneuvering through obstacles (Figure \ref{total_convergence_time}). Thus, PF was intractable on the corridor and concave maps with our minimum scenario of 4 robots, as the purely reflexive behaviour results in deadlocks with obstacles. We refrained from scaling further on these maps as a result.

\textbf{Proportional Barrier Certificates Navigation:} The PBC controller was tractable on only the empty maps (1,2) but resulted in the least distance travelled, had the lowest convergence time, and covered the least area in it’s path among all navigation algorithms. The algorithm had few collisions in empty maps (1,2), slightly more than PF; however as the number of robots scaled to 16, it had the most collisions of all algorithms (Figures \ref{total_collisions}), likely due to the violation of initial safe set conditions. This algorithm requires robots to spawn at least some threshold (denoted by parameter Ds) away from each other at spawn time. In map 1 (Figure \ref{experimental_maps}) where the robots are spawned in a dense region, this condition is violated in the 16 robot case. Furthermore, there is a chance of this condition being violated in the second map despite a wider initial spawn region. A collision is recorded when the collision disk of 2 robots intersect for every second. Thus, if the initial safe-set guarantee is violated then two or more robots may collide and incur many collisions, skewing the results.

This algorithm required less parameter tweaking than PF; but was very specific to scenario. Further parameter tweaking as the number of robots increased may have prevented collisions; however we set our experimental evaluation to use the same set of parameters for all 3 cases of robots (4,8,16), and only changed parameters between different maps.

\textbf{DMA-RRT Navigation}: DMA-RRT was the most robust of all the algorithms, requiring minimal tweaking of parameters and being the only navigation algorithm that was tractable in every scenario.
This algorithm scaled well as the number of robots increased, almost not increasing in convergence time on empty maps and the uniform map (maps 1,2,3) (Figure~\ref{experimental_maps}) as the robots had alternative routes to take. On more cluttered environments (maps 4, 5) (Figure~\ref{experimental_maps}) the convergence time increased more significantly as the robots must take turns navigating through dense regions (Figure \ref{total_convergence_time}). An interesting observation is that in every scenario, the amount of distance travelled increased by less than a factor of 2 despite the number of robots doubling. One possible explanation is that with fewer robots the merit-based token is passed to each robot more frequently, meaning each robot is able to activate its current plan. With all robots acting at once some of them must take longer paths to avoid their neighbour's broadcasted trajectory which is an obstacle in the environment. However, when there are more robots the convergence time increases because the robots must take turns; however now the robots may take better paths as they aren't all navigating to the goal at once, many are stationary. Despite the theoretical guarantee of collision avoidance, collisions were present in our evaluation due to latencies in the ROS system and token exchange. Using the extended algorithm, Cooperative DMA-RRT, which involves emergency stops, may reduce the number of collisions when multiple robots face the same goal as it allows an agent to request others to stop if in close proximity.

\textbf{Individual Dynamic Coverage (IDC)}: The IDC algorithm had several parameters and required tuning. There were no collisions on empty maps (1,2) (Figure~\ref{experimental_maps}) without static obstacles, and the average number of collisions across all maps either remained the same or decreased as number of robots increased. One possible explanation is that with many robots in an environment, the coverage levels of the spawning region quickly increase which hinders many robots from exploring further. This results in less time for convergence, and less total distance travelled as only a few select robots scout the remaining portion of the environment (Figures \ref{exploration_convergence}, \ref{exploration_work}). Additionally, as the robots attract and repel each other by nature of the swarm algorithm, having more robots on the map may aid in preventing clustering of robots in obstacle dense regions.

\textbf{Group Dynamic Coverage}: GDC required more tuning than the swarm coverage algorithm. The behaviour was similar to that of the swarm dynamic coverage; however due to a leader robot guiding other robots to a specific region, significantly more distance was travelled as the whole group moved together and thus repeated coverage of each others area (Figure \ref{exploration_work}). This further resulted in a longer time for convergence.

\begin{figure}[h]
  \includegraphics[scale=0.092]{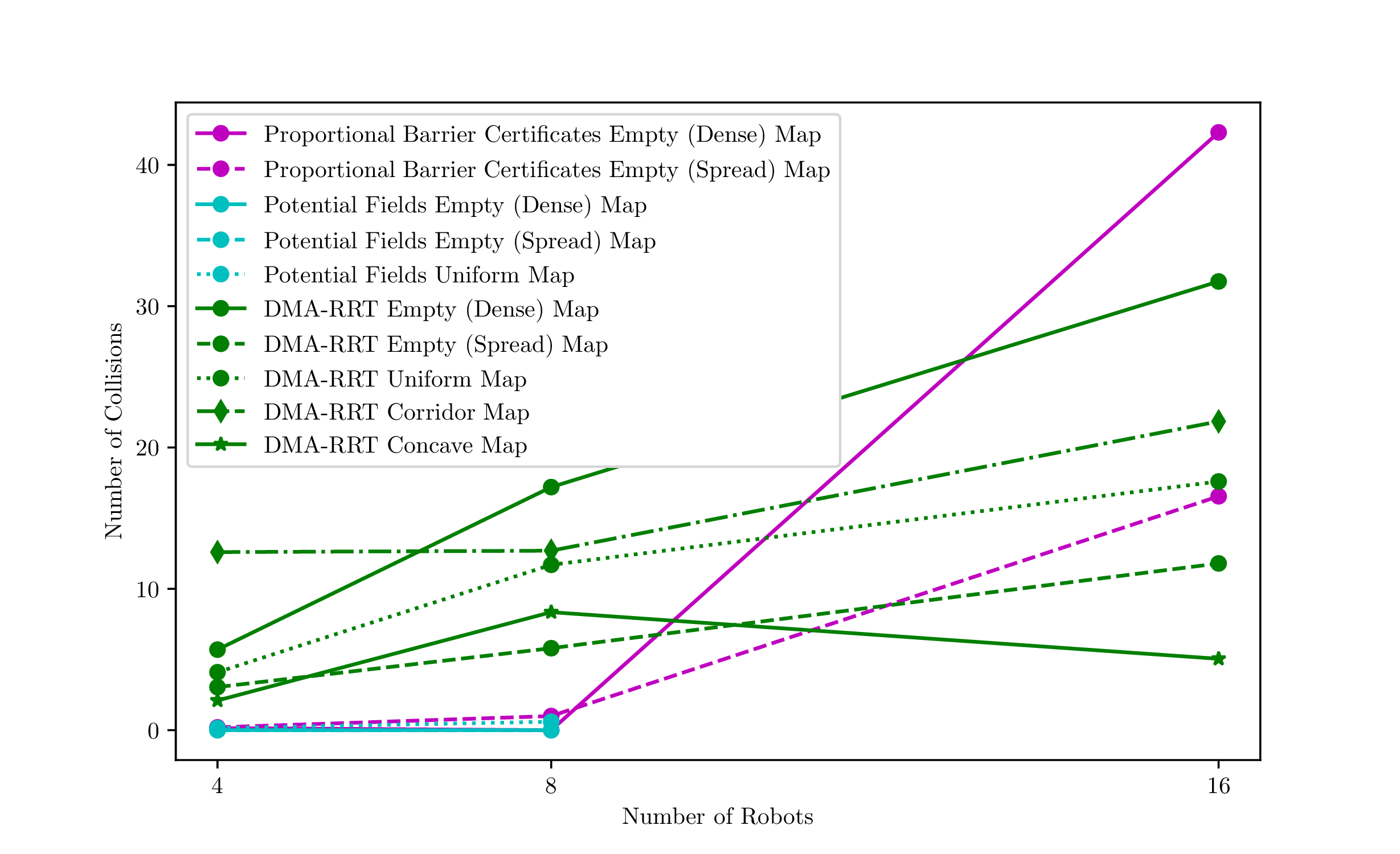}
  \centering
  \caption{Number of Collisions for Navigation Algorithms. Algorithms deemed infeasible for certain maps and numbers of robots were not plotted.}
  \label{total_collisions}
\end{figure}

\begin{figure}[h]
  \includegraphics[scale=0.092]{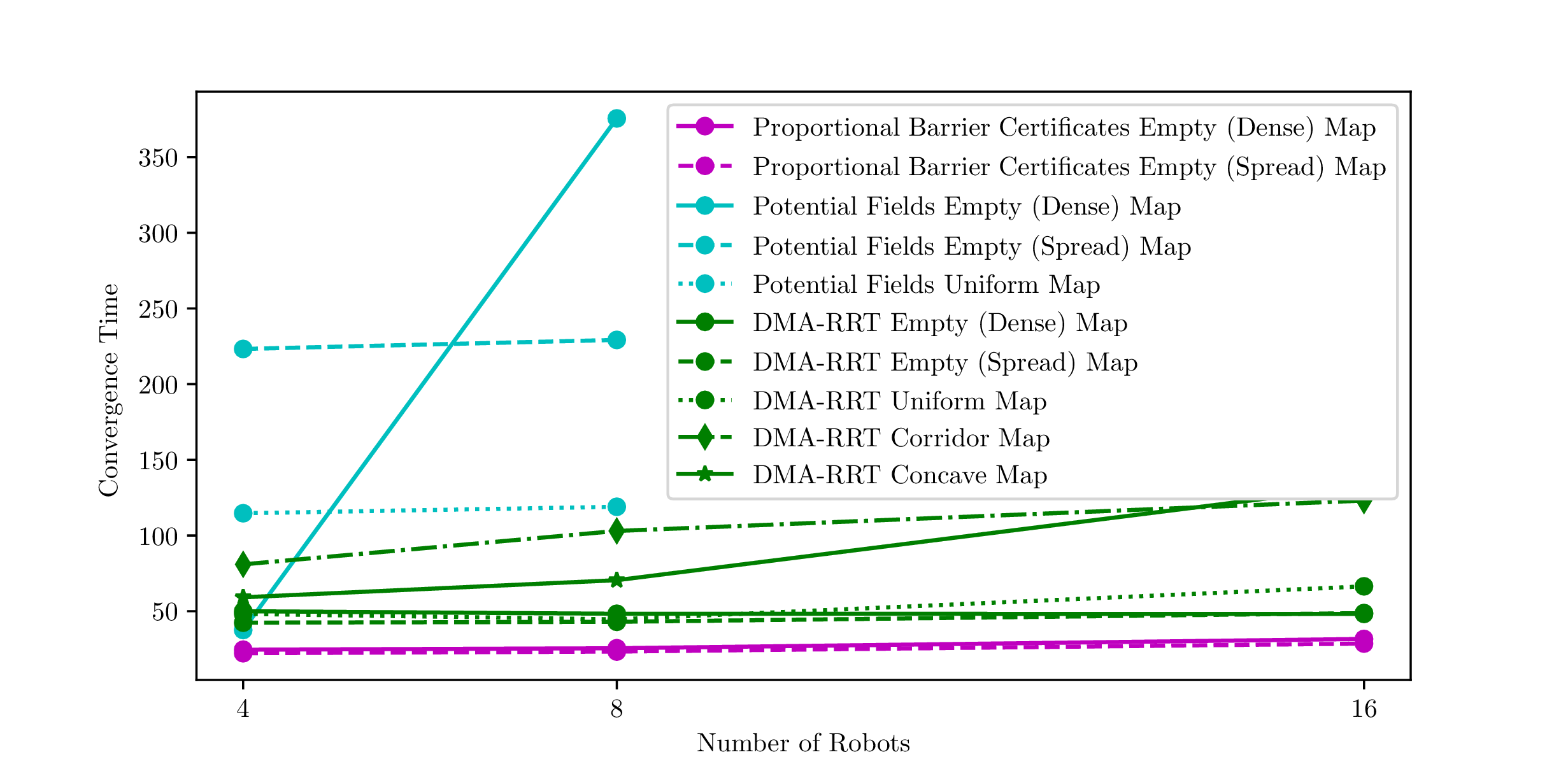}
  \centering
  \caption{Convergence Time for Navigation Algorithms}
  \label{total_convergence_time}
\end{figure}

\begin{figure}[h]
  \includegraphics[scale=0.10]{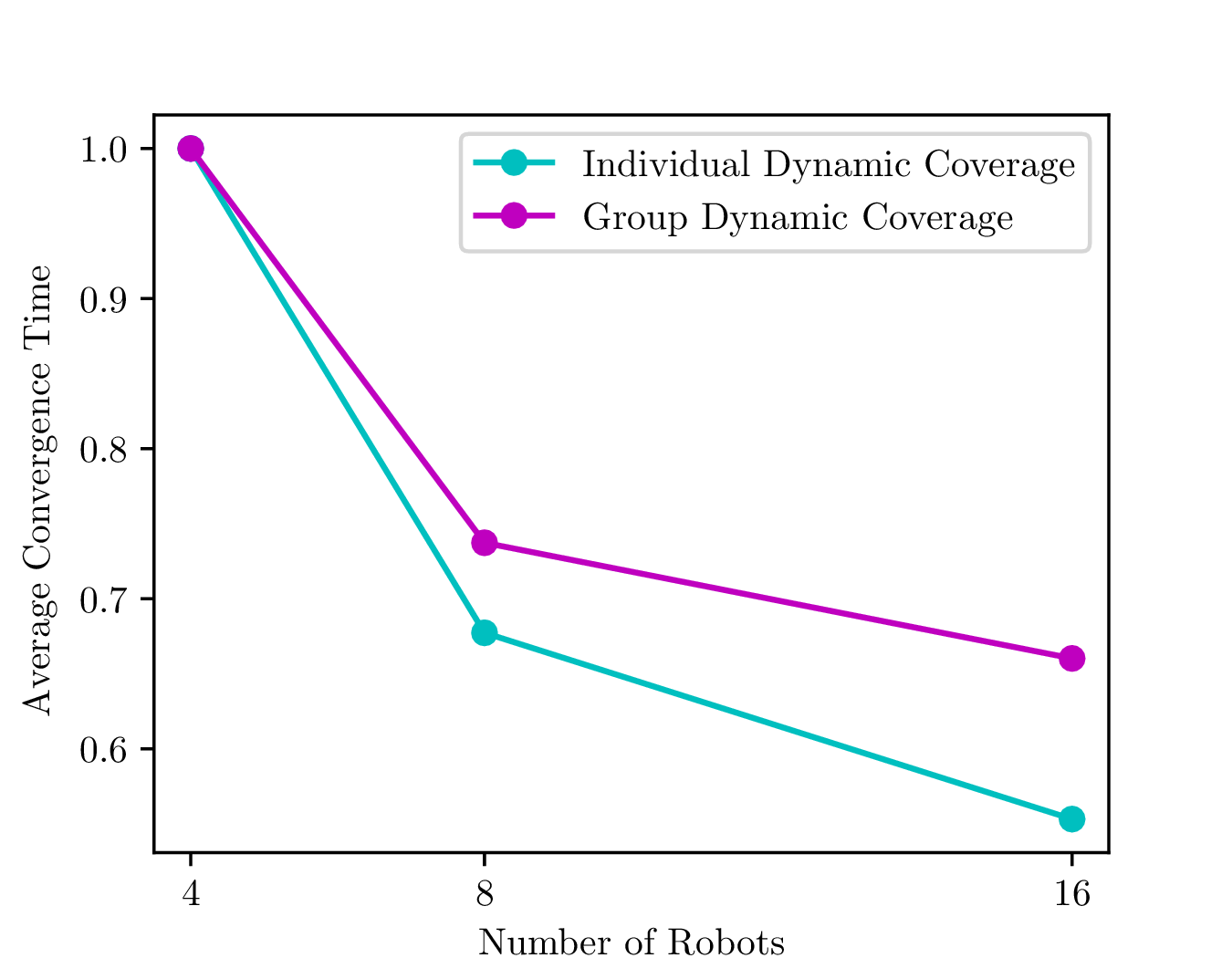}
  \centering
  \caption{Average Time to Convergence for the Dynamic Coverage Algorithms}
  \label{exploration_convergence}
\end{figure}


\begin{figure}[h]
  \includegraphics[scale=0.092]{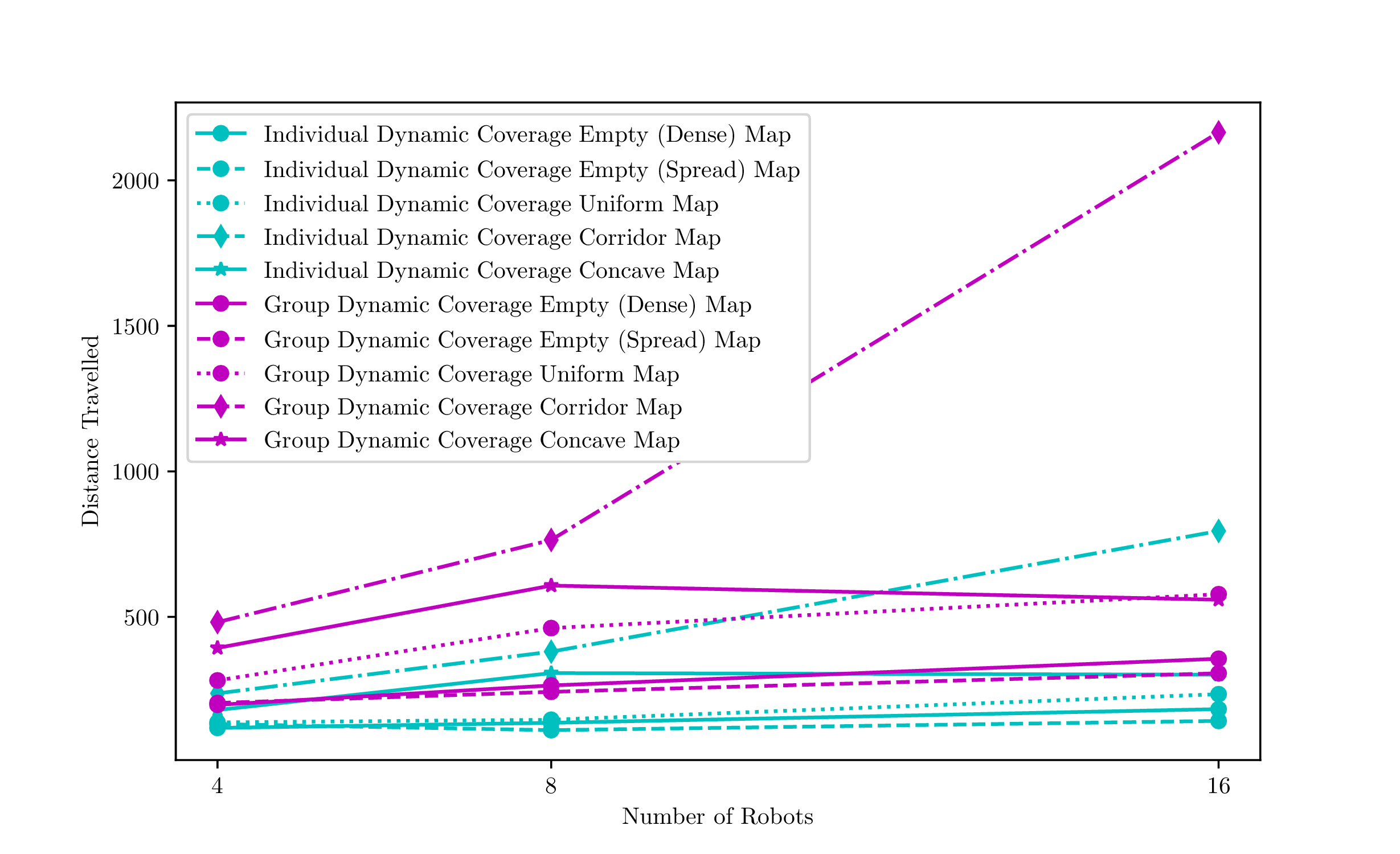}
  \centering
  \caption{Distance Travelled for both of the Dynamic Area Coverage algorithms on all map types.}
  \label{exploration_work}
\end{figure}





In contrast with the navigation task, IDC overall performed better than multi-robot dynamic coverage. IDC required less distance travelled, and time to converge in every scenario. Furthermore, IDC had the lowest number of collisions among any algorithm on the empty maps (1,2) (Figure \ref{experimental_maps}); while multi-robot dynamic coverage yielded less collisions on obstacle maps (3,4,5) (Figure \ref{experimental_maps}).

\subsection{Discussion}
In general, swarm navigation had less collisions than multi-robot navigation on every map it converged on with less than 16 robots (1,2,3) (Figure \ref{experimental_maps}); however the swarm algorithms quickly became intractable with obstacles and an increased number of robots. Furthermore, the swarm algorithms required significant tuning and it is cumbersome to explore the large parameter space. Finding a configuration that resulted in no collisions renders the convergence time extremely slow; but tweaking so there is convergence within reasonable time introduced collisions.

This means if all robots are closely grouped together, thus sensing each other, it would be equivalent in bandwidth to multi-robot dynamic coverage's $\sim 8.90$ kb/sec because we have 8 robots communicating to each other.
However, with swarm algorithms, the lower bound of bandwidth is actually still 0 kb/sec, the reason being if the robots are spread out far enough they won't sense each other, and thus won't receive the broadcast.

This highlights a fundamental contrast between the algorithms in our comparison: the swarm algorithms lack of explicit communication as opposed to multi-robot algorithms. Even with the case where sensing is treated as a form of communication, the swarm algorithms still do not have guaranteed communication due to limited sensing radius.

A final note of mention: despite each algorithm having theoretical guarantees for collision avoidance given some set of parameters, no such guarantee was satisfied during experimentation. We believe this is in part due to latency introduced by ROS, along with suboptimal parameter selection for the given scenario.

\section{Analysis}
\label{sec:analysis}

It was observed that, in the context of dynamic coverage, a swarm algorithm outperforms a multirobot algorithm in the metrics of distance travelled and time to convergence. In the context of navigation, however, this is not the case. The analysis of these findings is based on the Robotics Information Invariants work of \cite{donald1995information} and \cite{donald1997information}.


\subsection{Information Invariants Concepts}
The theory of Information Invariants for robotics was proposed in 1995\cite{donald1995information} and expands on the work of \cite{blum1978power}, where the task of maze searching by an automaton was proved to be solved in three different configurations involving a counter, pebble, or other agent. Since then, the theory has been used to analyze problems in robotic manipulation and control (ex: \cite{bohringer1997distributed}).

The theory serves to quantify the tradeoff between sensing, actuation, calibration, and communication in multi-robot systems. As in the work of \cite{donald1995information}, our groups of robots are modelled as circuit graphs. The following definitions will prove useful to our analysis:

\textbf{Definition 3.1} A set of sensors attached to computational and motor components able to alter their position in state space is termed a \textit{Sensor System}. A system of robots is an example of a \textit{Sensor System}.

\textbf{Definition 3.2} \textit{Circuit Graphs} are graphs representing the sensor system, $G = (V, E)$, where in the case of a system of robots the vertices $V$ are the sensori-motor components of the system (each individual robot) and the edges $E$ are the sensory or explicit communication connections between robots.

\textbf{Definition 4.1} \cite{donald1995information} For two sensor systems $S$ and $Q$ we say $Q$ simulates $S$ if the \textit{output} of $Q$ is the same as the \textit{output} of $S$. When viewing the sensors systems as dynamical systems, the \textit{output} of a sensor system refers to the limit set of the sensor system, a term in Dynamical Systems Theory referring to the state a dynamical system reaches after an infinite amount of time has passed. We write $S \cong Q$.

Our goal is to establish relations such as $S \cong Q + A$, where $S$ and $Q$ are swarm/multi-robot systems, and $A$ is an additional capability such as sensing or useful computation, implying that $S$ and $Q + A$ \textit{deliver equivalent information}.

A calibration of the system constrains the spatial relation between its various nodes, and a calibration required to install a sensor system (how it should be configured at the start of a mission to work) is termed an \textit{installation calibration}.

\textbf{Definition 5.1} \cite{donald1995information} Consider two sensor systems $S$ and $Q$. When $S$ and $Q$ require equivalent installation calibrations, and when the calibrations required to install $Q$ are necessary to specify $S$, we say $S$ dominates $Q$ in calibration complexity.

According to \textbf{Definition 6.2} in \cite{donald1995information}, for two sensor systems $J$ and $Q$ we write $J \leqslant Q$ when:
\begin{enumerate}
    \item $J$ simulates $Q$ ($Q \cong J$)
    \item $J$ dominates $Q$ in calibration complexity
    \item mb($Q$) is bounded above by mb($J$)
\end{enumerate}
where mb(.) denotes the maximum bandwidth of the system as in Definition 6.1 of \cite{donald1995information}:

\textbf{Definition 6.1.} \cite{donald1995information} We define the internal (resp. external) bandwidth of a sensor system $S$ to be the greatest bandwidth of any internal (resp. external) edge in $S$ (an edge represents explicit communication between two robots). We define the maximum bandwidth mb($S$) to be the greater of the internal bandwidth, external bandwidth, and the output size of $S$.

The intuition behind \textbf{Definition 6.2} is that $J$ as a sensor system would be reducible to $Q$: It can simulate $Q$ and is at least as complex in terms of calibration and overall bandwidth. A system of robots $J$ would be $k$-wire reducible to $Q$ if they satisfy \textbf{Definition 6.2} when $Q$ possesses $k$ additional pathways of communication between robots.

\subsection{Dynamic Area Coverage}

In this section, we try to establish an information reduction between IDC and GDC, so as to better explain our findings that multi-robot dynamic coverage does not always perform better than swarm dynamic coverage. Recall that IDC involves no explicit communication of a perturbation point, whereas GDC does.

Due to the gradient based nature of the two algorithms, the only information available to each robot (without explicit communication) at any given time are the own robot’s positions, and the discretized map with information of the coverage levels at every discretized point.

The discretized map used for multi-robot dynamic coverage tasks as in \cite{hussein2007effective, atincc2014swarm} has a fundamental limitation  in that the coverage value of a certain cell is capped at a maximum. This characteristic, as we will see, causes loss of information in the system and opens up the possibility for a hierarchy of information theoretic reductions in the task of multi-robot dynamic coverage. It is also a characteristic necessary for the task of dynamic coverage to occur, as the authors introduced it in order for the gradients to not flow towards already explored areas. If cells were not capped, a robot would never finish exploring a given cell. In the computational space, this refers to the fact that no additional information can be stored in the map's cell, and an agent acting upon this cell will cause no change. In the real world, this could correspond to, for example, robots marking the covered ground with paint, where the painted ground at some point saturates with so much paint that no change is visible.

Let us denote the multi-robot system used to solve the multi-robot dynamic coverage task as $M = (V, E)$, and the multi-robot system used to solve the IDC task as $S = (V', E')$. Following from definition 4.1 in \cite{donald1995information} $S$ and $M$ simulate each other because their limit sets are the same, therefore:

\begin{note}
$S \cong M$.
\end{note}


The multi-robot dynamic coverage system $M$ possesses the initial calibration requirement of guaranteeing that all robots are within a proximity radius for communication. This initial calibration requires the installation of n $2 DOF (x, y)$ robots such that they are all in communication proximity. The IDC system $S$, however, requires no initial calibrations, and therefore no installation requirements. Thus, $S$ and $M$ do not require equivalent installation calibrations.

Both dynamic coverage algorithms being analyzed involved a rule for perturbation point selection. These point selection rules are assumed to be \textit{deterministic}, implying that any randomness inherent in them is \textit{pseudorandom} and could be replicated through the knowledge of a \textit{random seed}. The possible rules for point selection can then be split into two families:

\begin{definition}
A Family 1 rule is a rule that only requires knowledge of the coverage levels of the map. A Family 2 rule is a rule that requires knowledge of the position of at least one other agent in the map.
\end{definition}

Following these definitions, we propose certain facts about our system that will be useful in proving our reduction:

\begin{proposition}
The state \textbf{q} of neighbouring robots is not always deducible from the coverage levels of the map.
\end{proposition}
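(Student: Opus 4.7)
The plan is to prove the proposition by exhibiting a counterexample, that is, by constructing two distinct robot configurations that nevertheless produce identical coverage maps, showing that the map-to-state inference problem is not well-posed. The key structural fact to exploit is the one emphasized in the preceding discussion: each discretized cell has a capped (saturating) coverage value. Saturation is precisely the mechanism by which information about how many times, and by which robots, a cell was visited is destroyed. Once this is in hand, the proposition follows immediately because a non-injective map from robot states to coverage maps cannot be inverted.

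First, I would fix notation: let $c : \text{cells} \to [0, c_{\max}]$ denote the coverage map, and let $\mathbf{q} = (q_1, \dots, q_n)$ denote the joint state of the $n$ robots. The coverage $c$ is produced by integrating contributions from the robot trajectories, clipped at $c_{\max}$ per cell. Second, I would construct the counterexample. Consider a small region of the map in which every cell has already been driven to saturation $c_{\max}$ by prior visits. Then place robot $i$ anywhere inside this saturated region in configuration $\mathbf{q}^{(1)}$, and in configuration $\mathbf{q}^{(2)}$ move robot $i$ to any other cell in the same saturated region while leaving all other robots in the same positions. Because every cell the robot could occupy in either configuration is already at the cap, no subsequent (instantaneous) contribution changes $c$, so the coverage map observed by every other robot is identical in the two cases, yet $\mathbf{q}^{(1)} \neq \mathbf{q}^{(2)}$. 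A neighbouring robot inspecting $c$ therefore cannot distinguish between the two, which is exactly the failure of deducibility asserted by the proposition.

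As a second, complementary construction I would point out that even without full saturation, the coverage map records only an aggregate occupancy history, not the identity or instantaneous location of the contributor. Two trajectory histories in which two robots swap roles (robot $A$ traces out the path of robot $B$ and vice versa) produce the same coverage map but correspond to different current states $\mathbf{q}$. This reinforces that $c \mapsto \mathbf{q}$ is not a well-defined inverse, and it does not rely on the saturation mechanism alone.

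The main obstacle I anticipate is not technical but expositional: the coverage dynamics in the cited works are given only informally, so I must be careful to state the construction in a way that does not depend on unstated assumptions (for example, sensing radius, time discretization, or whether cells are saturated from the outset). To handle this, I would phrase the counterexample as an existence claim, asserting only that there \emph{exists} a reachable instant during a typical execution of IDC or GDC at which saturated regions exist (which is generic once coverage progresses), and at that instant the above ambiguity applies. This is sufficient for the word \textit{always} in the statement, and keeps the proof robust to modelling variations.
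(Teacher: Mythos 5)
Your proposal is correct and takes essentially the same approach as the paper: both arguments exhibit a robot inside a fully saturated region (large enough relative to its coverage footprint) whose movements leave the coverage map unchanged, so the map cannot determine its state. Your version merely formalizes this as two explicit indistinguishable configurations and adds a supplementary identity-swap observation, but the core counterexample is the paper's own.
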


\begin{proof}
This result stems from the fact that information is stored in the specified discretized map with the coverage values at every point in the map having a maximum value $Cstar$ (a parameter in the algorithm). We can then propose the situation where a neighbouring robot is situated in the middle of a fully covered area that is larger than the neighbouring robot’s coverage radius, so any change in the robot’s position will not increase the coverage levels of the map. Therefore, information is lost and the state of this robot cannot be deduced.
\end{proof}

\begin{proposition}
\begin{align*}
H_g + k \cdot comm(new\_point) \cong H_g + \sum_{i \in N} k \cdot comm(\bf{q}_i)
\end{align*}
where $H_g$ is a swarm gradient dynamic coverage system, $N$ is the set of all robots whose state is necessary by the rule for the new point calculation, and $k$ is the number of total robots in the system $-1$, because the leader communicates to all other robots in the system.
\end{proposition}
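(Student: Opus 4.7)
The plan is to invoke Definition 4.1 directly: two sensor systems are equivalent ($\cong$) precisely when they have the same output, i.e., the same limit set. So I would show that in both systems, every robot eventually possesses the information needed to compute the next perturbation point, and hence both groups execute identical trajectories and reach identical limit sets. The argument splits into two simulation directions, and the fact that the point-selection rule is deterministic (as stipulated just before Definition~1) is what makes the reduction go through.

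First I would unpack the left-hand side. In $H_g + k\cdot\mathrm{comm}(new\_point)$, a leader robot computes the new point using the deterministic selection rule, which by hypothesis depends on (i) the coverage levels of the discretized map, already available to every robot through $H_g$, and (ii) the states $\mathbf{q}_i$ of the robots in $N$. The leader then broadcasts $new\_point$ over $k$ communication channels, one to each of the other $k$ robots in the system. At the end of this exchange, every robot knows $new\_point$ and therefore executes the prescribed perturbation.

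Next I would unpack the right-hand side. In $H_g + \sum_{i \in N} k\cdot\mathrm{comm}(\mathbf{q}_i)$, each robot in $N$ broadcasts its state to every other robot using $k$ channels. After these exchanges, every robot holds the full tuple $\{\mathbf{q}_i\}_{i \in N}$ together with the map coverage obtained from $H_g$. Since the point-selection rule is deterministic and (by Definition~1 plus the definition of $N$) depends on exactly these two pieces of information, every robot can locally recompute the identical $new\_point$ and perform the same perturbation. Thus the induced motion — and hence the limit set — is the same as in the left-hand side. By Definition~4.1 of \cite{donald1995information}, the two systems simulate each other and $H_g + k\cdot\mathrm{comm}(new\_point) \cong H_g + \sum_{i \in N} k\cdot\mathrm{comm}(\mathbf{q}_i)$.

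The main obstacle I expect is making precise the claim that the robots' post-communication local computations really do yield the same $new\_point$ in both systems. This requires explicitly invoking determinism of the rule (so that shared pseudorandomness via a common seed, as noted before Definition~1, does not create divergence between agents) and invoking the characterization of $N$ as exactly the set of robots whose states the rule consults (so nothing else is needed beyond $H_g$'s map information). Once these two points are nailed down, the equivalence of limit sets is immediate and the $\cong$ relation follows directly from Definition~4.1, with no appeal to the stronger calibration or bandwidth conditions of Definition~6.2.
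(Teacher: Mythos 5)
Your argument is correct and matches the paper's own proof in its essential content: both establish that the two communication schemes deliver the same effective information (the new perturbation point) to every robot, so the induced motions and hence the limit sets coincide, giving $\cong$ by Definition 4.1. The only structural difference is that the paper splits explicitly into the Family 1 case (no states needed, both sides trivially equal) and the Family 2 case (invoking Proposition 1), whereas you handle both uniformly via determinism of the rule and the definition of $N$ --- a slightly cleaner presentation of the same idea.
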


This proposition suggests that a swarm gradient dynamic coverage system with the leader communication of a new target point simulates a system with the communication of the states of all robots involved in the rule for new point selection. The minimum information communication required by the leader to achieve the same result are the states of all robots involved in the calculation of the new swarming location. With this proposition we are attempting to quantify exactly how much information is inherent in the GDC algorithm, and show that this is more than that present in IDC.

\begin{proof}
Case 1:
If the rule is of Family 1, then there are no robot positions involved and no communication is required at all, both sides are equal.
Case 2:
If the rule is of Family 2, then using Proposition 1 we know that something must be communicated in order for the new target point to be calculated since it involves the state of at least one of the other agents. The new target point for dynamic coverage can be calculated based on the knowledge of the point selection rule and the robot states involved in the calculation. Therefore, since these two information permutations are able to communicate the same new swarming target points, they are able to accomplish the same goals and consequently have the same limit sets, implying that they simulate each other.
\end{proof}


\begin{theorem}
GDC where the coverage levels of the map are bounded above is $|N| \cdot k$-wire reducible to IDC using the same map model. In other words,
\begin{align*}
M \leqslant S + \sum_{i \in N} k \cdot comm(\bf{q}_i)
\end{align*}
where $M$ is the multi-robot dynamic coverage system, $S$ is the IDC system, $k$ is the total number of agents -1, and $N$ is the number of agents involved in the new point selection rule.
\end{theorem}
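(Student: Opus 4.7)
The plan is to verify each of the three clauses of Definition 6.2 with $J = M$ and $Q = S + \sum_{i \in N} k \cdot comm(\mathbf{q}_i)$. For the simulation clause, I would start by observing that IDC is itself a swarm gradient dynamic coverage system $H_g$, and that GDC differs from $H_g$ only by the leader's broadcast of one new swarming point to each of the remaining $k$ followers, so $S = H_g$ and $M = H_g + k \cdot comm(new\_point)$. Plugging these identifications into Proposition 2 and chaining with the transitivity of $\cong$ then gives
\begin{align*}
M &\cong H_g + k \cdot comm(new\_point) \\
  &\cong H_g + \sum_{i \in N} k \cdot comm(\mathbf{q}_i) \\
  &= S + \sum_{i \in N} k \cdot comm(\mathbf{q}_i),
\end{align*}
which settles the first clause; Note 1 offers an independent sanity check that the overall limit sets agree.

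Next I would handle the calibration-dominance clause of Definition 5.1. The IDC system $S$ on its own requires no installation calibration, but each of the $|N| \cdot k$ appended $comm(\mathbf{q}_i)$ edges is a physical communication channel whose endpoints must be installed within mutual proximity. The leader's broadcast in $M$ imposes the analogous proximity constraint on the leader together with the $k$ followers, and in particular on every robot whose state is consumed by the rule for new-point selection. Thus the installations of $Q$ and of $M$ constrain the same set of robot pairs, so they are equivalent and any calibration required to install $Q$ is also required to specify $M$.

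For the bandwidth clause I would invoke Definition 6.1, which sets $mb$ as the maximum over single-edge payloads (and output size) rather than a sum. Each edge of $M$ carries one $2$-DOF new-point coordinate and each of the $|N| \cdot k$ added edges of $Q$ carries one $2$-DOF robot state $\mathbf{q}_i$, so the per-edge bandwidths coincide; since the output size (the common discretized coverage map) is identical in $M$ and $Q$, we conclude $mb(Q) \leq mb(M)$.

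The main obstacle I expect is the calibration step: Definition 5.1 insists on a genuine equivalence of installations, so one has to argue carefully that the $|N| \cdot k$ bilateral links in $Q$ impose no stronger spatial requirement than the single broadcast in $M$. The reduction rests on the fact that the neighbours whose states feed the new-point rule are precisely those already inside the leader's broadcast radius in $M$, so the two calibration requirements describe the same geometric configuration and neither is strictly stronger than the other.
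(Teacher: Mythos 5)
Your proposal follows essentially the same route as the paper: verify the three clauses of Definition 6.2 with $J = M$ and $Q = S + \sum_{i \in N} k \cdot comm(\mathbf{q}_i)$, use Proposition 2 (plus transitivity of $\cong$) for the simulation clause, argue calibration dominance via the proximity requirements, and compare per-edge payloads for the bandwidth clause. Your simulation step is in fact slightly more explicit than the paper's, since you spell out the identifications $S = H_g$ and $M = H_g + k \cdot comm(new\_point)$ before invoking Proposition 2. The one piece you omit is in the calibration clause: besides the physical proximity constraints, the paper also counts a non-physical \emph{information processing calibration} for $Q$ --- the point-selection rule must be programmed into every agent so that the followers can deduce the new swarming point from the received states $\mathbf{q}_i$, whereas in $M$ only the leader needs the rule. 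Since Definition 5.1 demands that \emph{all} calibrations required to install $Q$ be necessary to specify $J$, your equivalence argument over robot pairs alone leaves this calibration unaccounted for; the paper closes it by asserting that rule specification is also necessary to specify $J$. Adding that observation would bring your clause 2 in line with the paper's; the rest is sound.
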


\begin{proof}
As per Definition 6.2, let us denote the two systems $M$ and $S + \sum_{i \in N} k \cdot comm(\textbf{q}_i)$ as $J$ and $Q$ in the definition above.

1. \textit{$J$ simulates $Q$ ($Q \cong J$)}:
    It follows from Note 1 that $M$ simulates $S$ due to their matching limit sets. $J$ and $Q$ also have matching limit sets, because $S + \sum_{i \in N} k \cdot comm(\textbf{q}_i)$ is the system $S$ with an additional $|N| \cdot k$-wire communication of the robot states, which from Proposition 2 simulates $S + k \cdot comm(new\_point)$.

2. \textit{$J$ dominates $Q$ in calibration complexity}:
    It was mentioned that $M$ requires the calibrations of placing all agents within a proximity radius. $S$, however, requires no initial physical calibration relatively. When adding $\sum_{i \in N} k \cdot comm(\textbf{q}_i)$ as an information component to $S$, a calibration is now required in order to ensure proper communication of the swarm due to proximity constraints. An additional \textit{information processing calibration} (nonphysical) is required, and that is of programming the desired \textit{swarming rule} into every agent such that they can now deduce the new swarming point from the received robot states. This calibration now required to install $Q$ is necessary to specify the full properties of the system $J$ (sufficient proximity for communication, and rule specification), implying that $J$ dominates $Q$ in calibration complexity according to Definition 5.1. NOTE: Both algorithms require the same additional initial calibration that they be sufficiently far away from an obstacle in order to maintain such a state, but this does not alter the analysis.

3. \textit{mb($Q$) is bounded above by mb($J$)}:
    Taking all the components (vertices) of the system (graph) as \textit{black box sensor systems} or \textit{robots} where the only information potential know regarding the system is the output size $b$, then the maximum bandwidth of such systems is bounded above by $b$ according to Definition 6.1. Since all robots in both systems are assumed to be homogeneous and all explicitly communicated information shares the same \textit{output size} $log \mathbb{K}(b)$, they have the same maximum bandwidth and so it follows that $mb(J) = mb(Q)$.
\end{proof}




This result, together with our experimental findings demonstrating that individual dynamic coverage covered less distance and converged faster in every map scenario, shows that our swarm system $S$ with less information complexity than our multi-robot dynamic coverage system $M$ is able to constantly outperform the latter in these two metrics. A conclusion can be drawn that adding information complexity in the form of communication to a multiple robot dynamic coverage system does not always increase its performance with regards to convergence time and distance travelled.

\subsection{Navigation}

Experimental results demonstrated that DMA-RRT converges faster and travels less distance in cluttered environments, meanwhile PBC performs best overall on empty maps, and PF yields the smallest number of collisions on scenarios in which it converges. We will attempt to establish a reduction between these three algorithms in order to offer a better understanding of the experimental findings.


The circuit models for the three navigation algorithms \textit{PF}, \textit{Barrier Certificates}, and \textit{DMA-RRT} will be denoted as $P$, $B$, and $DMA$ for short. Since all three navigation algorithms are guaranteed to achieve the same limit sets (final configurations), they \textit{simulate} each other as computational circuits:
\begin{align}
B \cong P \cong DMA
\end{align}
where the $\cong$ (simulation) operation is elementary transitive \cite{donald1995information}.

Work remains in showing that the DMA-RRT algorithm reduces to any of the barrier certificates or PF algorithms, and this can be done by meeting the requirements of Definition 6.2 \cite{donald1995information}.

This leads us to an attempt at forming a reduction as outlined in Definition 3.12 \cite{donald1997information}:

1. DMA-RRT and barrier certificates \textit{simulate} each other by virtue of having the same limit sets.

2. Barrier Certificates dominates DMA-RRT in calibration complexity because it has the base requirement that robots must be located within a sufficient distance from an obstacle in order for the safe set to be forward invariant. DMA-RRT, however, has no such constraints.

3. mb(DMA-RRT) is \textbf{not} bounded above by mb(BC) because DMA-RRT communicates an entire path ($2DOF \times num\_waypoints$), whereas Barrier Certificates obstacle sensing produces only a new velocity $v$ for a robot to reactively avoid a collision ($2DOF$). Since all robots in both algorithms are seen as black-box sensor sytems with external communication, their maximum bandwidth is denoted by the maximum of their output size and their external bandwidth as in Definition 6.1 \cite{donald1995information}.

The converse, Barrier Certificates reducing to DMA-RRT, also does not hold because the calibration complexity of DMA-RRT does not dominate that of Barrier Certificates. A similar argument can be made for the PF algorithm, hence both swarm navigation algorithms are not provably reducible to the multirobot navigation algorithms and vice versa.

This finding suggests that the experimental results relating the performances of the navigation algorithms, differently from those of the dynamic coverage algorithms, do not relate simply through an information theoretic reduction. It appears that the information complexity of DMA-RRT is fundamentally different from that of PBC and PF, offering possible explanations for the advantages of each algorithm in different scenarios.




\section{Conclusion}
\label{sec:conclusion}
Our research provided a comparison for identifying and highlighting the pros and cons of swarm and  general multi-robot algorithms in a manner that can inform offline design decisions made by engineers  or  online operational decisions made by supervisory operators of multi-robot systems. We first conducted an evaluation on the ROS platform using five algorithms in order to compare multi-robot and swarm systems in two application domains: navigation and dynamic area coverage. We then present several insights based on our collision, coverage, distance, and convergence metrics. Lastly, we extend our results to a comparative analysis of the algorithms based on the theory of information invariants, which provided a theoretical characterization supported by our empirical results.

\bibliographystyle{IEEEtran}
\bibliography{references}

\end{document}